
\documentclass{amsart}
\usepackage{amssymb}
\usepackage{amsmath}
\usepackage{amsfonts, latexsym}
\usepackage{color,graphics}
\usepackage{graphicx}
\usepackage{cite}
\usepackage{caption}
\usepackage{subcaption}
\usepackage{float}
\usepackage[utf8]{inputenc}
\graphicspath{ {./} }
\usepackage{wrapfig}

\usepackage[ruled,vlined]{algorithm2e}

\newtheorem{theorem}{Theorem}[section]

\newtheorem{lemma}[theorem]{Lemma}

\newtheorem{definition}[theorem]{Definition}

\begin{document}

\author{Gurpreet S Kalsi}
\address{The University of Michigan}
\email{gskalsi@umich.edu}

\author{Steve Damelin (Project Advisor)} 
\address{The University of Michigan}
\email{damelin@umich.edu}

\title[Preprocessing power weighted shortest path data with  Well Separated Pair Decomposition]{Preprocessing power weighted shortest path data using a s-Well Separated Pair Decomposition}
\date{\today}

\keywords{Well Separated Pair Decomposition; Power Weighted Shortest Path; algorithm; fusion; preprocessing}

\begin{abstract}
For $s$ $>$ 0, we consider an algorithm that computes all $s$-well separated pairs in certain point sets in $\mathbb{R}^{n}$, $n$ $>1$. 
For an integer $K$ $>1$, we also consider an algorithm that is a permutation of Dijkstra's algorithm, that computes  $K$-nearest neighbors using a certain power weighted shortest path metric in $\mathbb{R}^{n}$, $n$ $>$ $1$.   
We describe each algorithm and their respective dependencies on the input data.  We introduce a way to combine both algorithms into a fused algorithm.  Several open problems are given for future research.
\end{abstract}

\maketitle
\tableofcontents
\newpage
\setcounter{equation}{0}

\section{Introduction}
Clustering high dimensional data is an increasingly important part of many machine learning algorithms.  See for example the following; Clustering \cite{DD5}, Principal Coordinate Clustering \cite{DD4}, Power Weighted Shortest Paths for Clustering Euclidean Data \cite{DD2} , and the many references cited there-in \cite {DD8}.  

Hence-forth, we will consider two high dimensional data clustering algorithms where the data is presented as a subset of $\mathbb{R}^{n}$ ($\mathbb{R}^{n}$ denotes  $n$-dimensional Euclidean space.  Unless specified, $\mathbb{R}^{n}$ will be endowed with the Euclidean metric).

Note that unless otherwise specified, our examples will be generated with $n$ $=$ $2$, that is the data will be a subset of $\mathbb{R}^{2}$. This is because it is difficult to visualize and graph higher dimensions. However, all the algorithms in this paper will be discussed for any $n$ $>$ $1$.  
If any high dimensional data (more than two dimensions) is graphed, the data will be projected coordinate-wise into its first two dimensions.

Two algorithms which are discussed in this paper are the following:
\begin{enumerate}
  \item The first algorithm is the $s$-well separated pair decomposition ($s$-wspd) and realization which was introduced in reference \cite{DD1}.  Here, $s$ $>$ $0$ is a fixed separation parameter. See Section 2.1.
  \item The second algorithm is a $K$ nearest neighbors (KNN) algorithm using a power weighted shortest path metric (p-wspm) which was introduced in reference \cite{DD2}. Here, $K$ $>$ $1$ is a fixed integer. See Section 4. 
\end{enumerate}

Each algorithm will now be discussed in terms of its purpose, its parameter(s) and its dependencies.  We will also give examples of each algorithm implemented on different sets of data.
\section{s-Well Separated Pair Decomposition and Realization}
For $s$ $>$ $0$, this section will introduce the $s$-wspd and   give an overview of how its data structure is constructed.  The next section will discuss the algorithm's data structure in term of how to construct a realization.  This is analogous to saying that the next section will discuss how to obtain $s$-well separated pairs from the $s$-wspd data structure.
\subsection{s-Well Separated Pair Decomposition}
The purpose of this algorithm is to create a data structure that organizes data points in pairs of clusters in a way that preserves approximate distance according to some predefined metric.  This algorithm clusters the data as a hierarchical approach\cite{DD9}.  As mentioned before, we will work with the Euclidean metric in this paper.  We will call this data structure the $s$-well separated pair decomposition data structure (for short, $s$-wspd data structure).  Note that we will distinguish the $s$-wspd algorithm from the $s$-wspd data structure by explicitly saying 'data structure' when referring to the data structure.  The $s$-wspd data structure is a binary tree-like structure that allows for the comparison of two nodes in a certain tree to compute a realization.  The data in each node of the $s$-wspd is a single point set.  Each point set must satisfy \textbf{Definition 2.1} (shown below).  

\begin{definition}\label{Definition 2.1}
Let S correspond to the point set in node $N$ with the following conditions
\begin{enumerate}
  \item[(a)] $S$ $\subseteq$  $\mathbb{R}^{n}$ , $n > 1$.  
  \item[(b)] $|S|$ $> 0$. Here, $|S|$ denotes the cardinality of the set $S$.
  \item[(c)] $S$ is a set of unique elements. 
  \end{enumerate}
\end{definition}

(a)-(c) in \textbf{Definition 2.1} are not strong assumptions that the set $S$ needs to satisfy.  Below are some examples of point sets that do and do not satisfy the conditions in \textbf{Definition 2.1}.

\vspace{5mm}
Point sets that do satisfy \textbf{Definition 2.1}:

\begin{itemize}
  \item \{(1, 1), (2, 2), (3, 3)\}
  \item \{(1, 2), (2, 1)\}
  \item \{(1, 1, 1, 1, 1)\}
\end{itemize}
\vspace{0mm}

Point sets that do not satisfy \textbf{Definition 2.1}:

\begin{itemize}
  \item \{(1), (2)\} - fails condition (a).
  \item \{\} - fails condition (b).
  \item \{(1, 2, 3), (1, 2, 3)\} - fails condition (c).
\end{itemize}
\vspace{5mm}

At a high level, the $s$-wspd data structure is constructed by taking an input node, finding the dimension in which the data spans the largest length and splitting the previously mentioned dimension according to a few rules (These rules will not be described in this paper, rather they are described by Paul Callahan \cite{DD1}).  The result of the split is two point-sets, in which two new child nodes are formed.  The data in each child node is one of the split point sets.  Then each newly constructed node undergoes the same process.  This is done recursively until the node has a point set which contains a single data point.  The pseudo-algorithm is given below.

\newpage
\begin{algorithm}
\SetKwData{Left}{left}\SetKwData{This}{this}\SetKwData{Up}{up}
\SetKwFunction{Union}{Union}\SetKwFunction{FindCompress}{FindCompress}
\SetKwInOut{Input}{input}\SetKwInOut{Output}{output}
\Input{A node which the point set contains all data points, AKA the head of the tree}
\Output{A valid $s$-wspd}

\addvspace{16pt}
\emph{construct\_s\_wspd(Node N)}:\\
\textbf{begin} 
\BlankLine
largDim = findLargestDimention($N$)\;
Node leftChild, Node rightChild = split($N$, largDim)\;
construct\_s\_wspd(Node leftChild)\;
construct\_s\_wspd(Node rightChild)\;
\BlankLine
\textbf{end} 
\caption{Constructing a $s$-wspd data structure}\label{algo_disjdecomp}
\end{algorithm}\DecMargin{1em}

Note that when there are a lot of data points, the recursion depth can become quite large.  Hence, this algorithm can be implemented by using a deque to avoid a stack overflow.  This algorithm implemented with a deque is a trivial extension of \textbf{Algorithm 1}, so it will not be shown.
\footnote{Also note that the data in each node can be perfectly reconstructed by taking the union of the data in each of the node's children.}

The realization is discussed below. 

\newpage
\subsection{Realization}

We now need to understand the concept of minimum $(n-1)$-spheres (also called minimum bounding spheres) before we discuss the realization. Given an integer $n > 1$, a minimum $(n-1)$-sphere is a sphere in $\mathbb{R}^{n}$ that bounds a set of points in $\mathbb{R}^{n}$.  
This sphere bounds the points while having minimum radius.  

Hence, in when dealing with data in $\mathbb{R}^{3}$, $n = 3$ hence $n-1 = 2$.  We will thus have a 2-sphere when dealing with data in $\mathbb{R}^{3}$.  

\textbf{Figure 1} below shows a $(n-1)$-sphere in 2 dimensions that is not a minimum $(n-1)$-sphere because the radius is not minimized.  Note, this is a 1-sphere since the data it is trying to bound is in $\mathbb{R}^{2}$.

\begin{figure}[h]
\includegraphics[scale = .33]{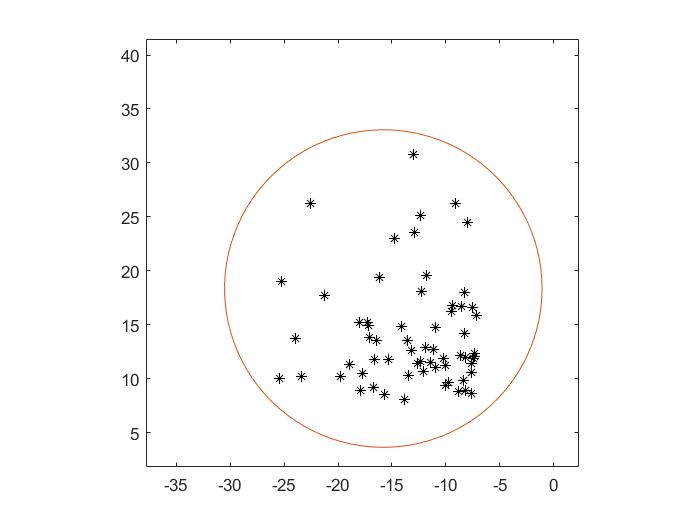}
\caption{1-sphere without the minimum radius 
}
\end{figure}

\textbf{Figure 2} below shows a $(n-1)$-sphere in 2 dimensions that is not a minimum $(n-1)$-sphere since it does not bound the data points. Note, this is a 1-sphere since the data it is trying to bound is in $\mathbb{R}^{2}$.

\begin{figure}[h]
\includegraphics[scale = .33]{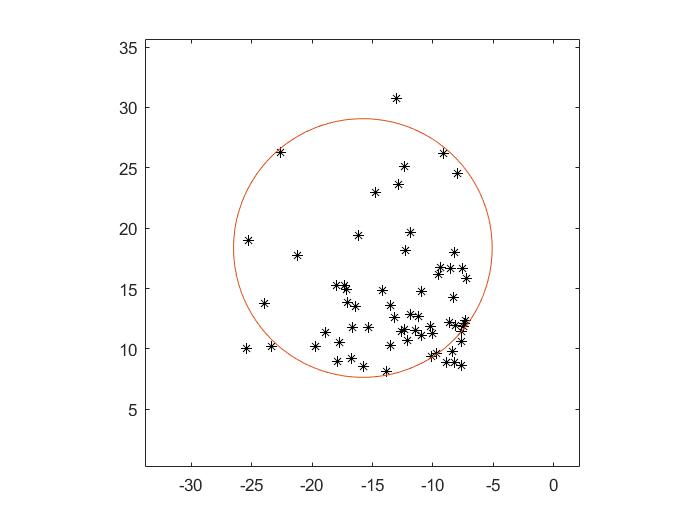}
\caption{1-sphere without the data points bounded in the $(n-1)$-sphere
}
\end{figure}

\newpage
\textbf{Figure 3} below shows the valid minimum $(n-1)$-sphere in 2 dimensions (AKA a 1-sphere)
\begin{figure}[h]
\includegraphics[scale = .33]{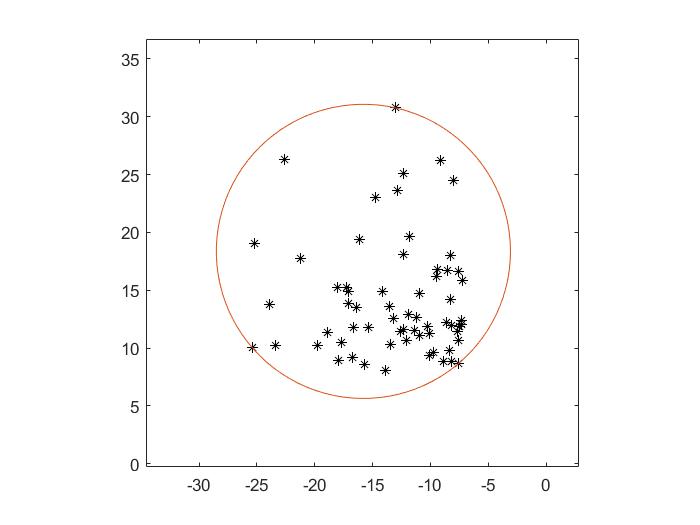}
\caption{1-sphere that bounds all data points in $\mathbb{R}^{2}$.
}
\end{figure}

We must also introduce the concept of $s$-well separated point-sets.  We assume we have two disjoint point-sets, and the valid minimum $(n-1)$-sphere around each point set.  The two point-sets are $s$-well separated if and only if the minimum distance between the two $(n-1)$-spheres is greater than $s$ multiplied by the maximum radius of the two $(n-1)$-spheres.  The variable $s$ acts as a separation parameter.  The formal definition is given below, as well as visuals.

\begin{definition}\label{Definition 2.2}
Lets assume two points sets $S_0$ and $S_1$, with each point set satisfying \textbf{Definition 2.1}.
Let us assume $S_0 \cap S_1 = \emptyset$.  Let $\rho$$0$ be the radius of the minimum $(n-1)$-sphere of $S_0$.  Let $\rho$$1$ be the radius of the minimum $(n-1)$-sphere of $S_1$.
Let $\varphi$ be the minimum distance between the two $(n-1)$-spheres.
Let $s$ be the separation parameter, defined by the user.
If the following condition is satisfied, $S_0$ and $S_1$ are $s$-well separated.
$s$ $*$ $\max$($\rho$$0$, $\rho$$1$) $\leq \varphi$.  Note, $*$ denotes multiplication.
\end{definition}

\begin{figure}[h]
\includegraphics[scale = .5]{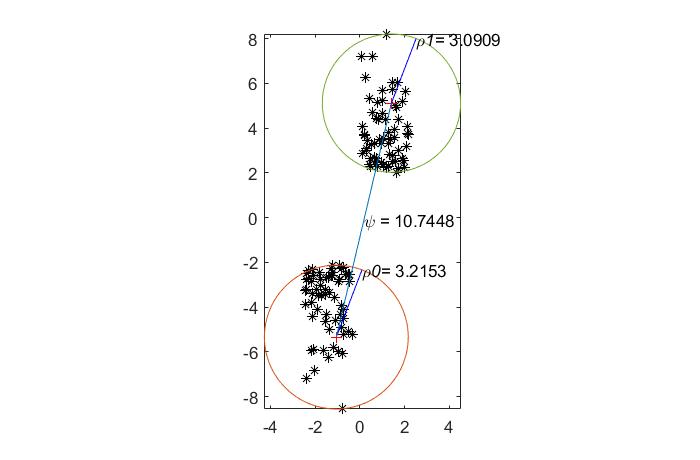}
\caption{\\
A $s$-well separated pair of non-empty point-sets.  $s$ = 1.5, dim = 2.\\
Note, $1.5 * \max(3.2153, 3.0909) \leq 10.7448$.
}
\end{figure}
\newpage

\begin{figure}[h]
\includegraphics[scale = .38]{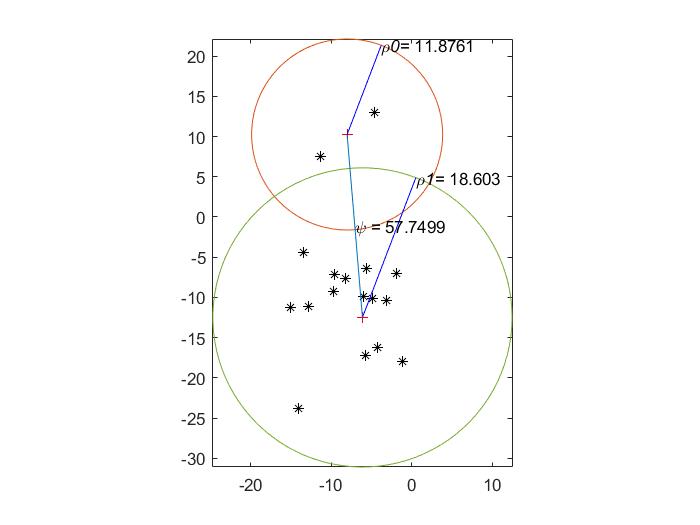}
\caption{\\
A $s$-well separated pair of non-empty point-sets. $s$= 1, dim=6.\\
Note, $1 * \max(11.8761, 18.603) \leq 57.7499$.
}
\end{figure}

Now that we can determine if two point-sets are $s$-well separated, we can use the $s$-wspd data structure to find all $s$-well separated point-sets.  This is analogous to computing the realization of the $s$-wspd data structure.  
To compute the realization of the $s$-wspd, we must have two non intersecting nodes.  Another way to say this is that none of the data points in node 0 should be in node 1, without loss of generality.  The point set corresponding to each node must also be non-empty.

This algorithm allows us to take any collection of points satisfying \textbf{Definition 2.1} and compute this $s$-well separated pair decomposition.  Once this $s$-well separated pair decomposition is constructed, we can recurse the tree, and find all possible $s$-well separated pairs that satisfy \textbf{Definition 2.2}.

Note, the algorithm computes the highest order well separated pairs.  That is, if set $S_0$ is $s$-well separated with set $S_1$, then all non-empty subsets of set $S_0$ are also $s$-well separated with all non-empty subsets of $S_1$.  The proof is below, and it holds without loss of generality.
	
\begin{lemma}
Let us assume we have two $s$-well separated point sets in Euclidean space, denoted as $S_0$ and $S_1$ respectively.  All  non-empty subsets of $S_0$ are $s$-well separated with $S_1$.  This holds without loss of generality.   
\end{lemma}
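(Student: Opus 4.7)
The plan is to verify the two ingredients of \textbf{Definition 2.2} directly for the pair $(S_0', S_1)$, where $S_0' \subseteq S_0$ is an arbitrary non-empty subset. Let $\rho_0, \rho_0', \rho_1$ denote the radii of the minimum $(n-1)$-spheres of $S_0, S_0', S_1$, and let $\varphi, \varphi'$ denote the minimum sphere-to-sphere distances from the minimum bounding spheres of $S_0, S_0'$ (respectively) to that of $S_1$. The disjointness $S_0' \cap S_1 = \emptyset$ is immediate from $S_0 \cap S_1 = \emptyset$, so only the separation inequality requires work. If I can establish both $\rho_0' \leq \rho_0$ and $\varphi' \geq \varphi$, then chaining them yields
\[
s\,\max(\rho_0', \rho_1) \;\leq\; s\,\max(\rho_0, \rho_1) \;\leq\; \varphi \;\leq\; \varphi',
\]
which is precisely the $s$-well separated condition for $(S_0', S_1)$.

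The first inequality $\rho_0' \leq \rho_0$ is essentially immediate. Since $S_0' \subseteq S_0$, the minimum $(n-1)$-sphere of $S_0$ is itself a (possibly non-minimum) bounding sphere for $S_0'$, so the minimum bounding sphere of $S_0'$ can have radius no larger than $\rho_0$. In particular, $\max(\rho_0', \rho_1) \leq \max(\rho_0, \rho_1)$, handling the left side of the inequality.

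The second inequality $\varphi' \geq \varphi$ is where I expect the main obstacle. The naive hope is that the minimum bounding sphere of $S_0'$ sits inside the minimum bounding sphere of $S_0$, from which $\varphi' \geq \varphi$ would follow by elementary geometry. However, the center of the minimum bounding sphere of $S_0'$ generally differs from that of $S_0$ (removing points can shift the center toward the remaining mass), and in fact the minimum bounding sphere of a subset need not be contained in the minimum bounding sphere of the full set. So a clean containment argument does not go through unchanged. The way I would attack this is to use the standard characterization of the minimum bounding sphere --- its boundary passes through at most $n+1$ active points of the set and its center lies in the convex hull of those active points --- to bound how far the center $C_0'$ can drift from $C_0$ in terms of $\rho_0 - \rho_0'$, and then combine this drift bound with the original separation to recover $\varphi' \geq \varphi$. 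Failing that, one could recast the well-separated condition in terms of a common bounding ball of equal radius $\max(\rho_0, \rho_1)$ for both sets (the Callahan--Kosaraju formulation), under which the subset property holds trivially by reusing for $S_0'$ the same ball that worked for $S_0$.
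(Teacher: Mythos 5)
Your first inequality $\rho_0' \leq \rho_0$ is fine and matches the paper's argument. The real issue is the second ingredient, $\varphi' \geq \varphi$, and here you have correctly put your finger on the step that the paper's proof simply asserts without justification (the paper even states the inequality in the wrong direction in prose before writing $\varphi_s \geq \varphi$). But your primary line of attack --- bounding the drift of the center $C_0'$ and recovering $\varphi' \geq \varphi$ --- cannot succeed, because the inequality $\varphi' \geq \varphi$ is genuinely false for minimum bounding spheres. Take $S_0 = \{(-1,0),(1,0),(0,1)\}$, whose minimum $1$-sphere is centered at the origin with radius $1$, and the subset $S_0' = \{(1,0),(0,1)\}$, whose minimum $1$-sphere is centered at $(\tfrac12,\tfrac12)$ with radius $\tfrac{\sqrt2}{2}$. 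The boundary point of the subset's sphere in the direction $(1,1)/\sqrt2$ is $(1,1)$, which lies at distance $\sqrt2 > 1$ from the origin: the subset's sphere protrudes outside the full set's sphere. Now put $S_1 = \{(t-1,t),(t+1,t)\}$ with center $(t,t)$ and radius $\rho_1 = 1$; a direct computation gives $\varphi = t\sqrt2 - 2$ while $\varphi' = t\sqrt2 - \sqrt2 - 1 = \varphi - (\sqrt2 - 1) < \varphi$. Choosing $t = 3/\sqrt2$ and $s=1$ makes $\varphi = 1 = s\max(\rho_0,\rho_1)$, so $S_0, S_1$ are $1$-well separated per Definition 2.2, yet $\varphi' = 2-\sqrt2 < 1 = s\max(\rho_0',\rho_1)$, so $S_0', S_1$ are not. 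No drift estimate can repair this; the lemma as literally phrased against Definition 2.2 fails at the boundary.

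Your fallback is therefore not a fallback but the only viable route: in the Callahan--Kosaraju formulation, two sets are $s$-well separated if there \emph{exist} enclosing balls of a common radius $r$ whose separation is at least $sr$, and then the hereditary property is immediate because the ball that enclosed $S_0$ also encloses $S_0'$. That is a genuinely different (and correct) argument from the paper's, but it proves the lemma for a different, existential notion of well-separatedness rather than for the minimum-bounding-sphere condition of Definition 2.2. If you go this way, you should say explicitly that you are replacing (or reinterpreting) Definition 2.2; as written, your proof is not complete, and the paper's own proof is not salvageable either, since it rests on the same false inequality you identified.
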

\begin{proof}
Let us assume that we have $S_0$ and $S_1$ from \textbf{Lemma 2.3}. We can denote the radius of the $(n-1)$-sphere of $S_0$ as $\rho_0$ and the radius of the $(n-1)$-sphere of $S_1$ as $\rho_1$.  We can also denote the minimum distance between the $(n-1)$-sphere of $S_0$ and the $(n-1)$-sphere of $S_1$ as $\varphi$. As stated before, $S_0$ and $S_1$ are $s$-well separated.
\\We can note that any subset of $S_0$ will have an $(n-1)$-sphere with a radius less than or equal to the radius of the $(n-1)$-sphere of $S_0$.  We will denote the radius of any arbitrary subset of the $(n-1)$-sphere of $S_0$ as $\rho_s$. As previously mentioned,  $\rho_s \leq \rho_0$.  
\\We can also note that the distance from the $(n-1)$-sphere of $S_0$ to the $(n-1)$-sphere of $S_1$ will always be greater than or equal to the distance from any arbitrary subset of the $(n-1)$-sphere of $S_0$ to the $(n-1)$-sphere of $S_1$. We will denote the distance between any arbitrary subset of the $(n-1)$-sphere of $S_0$ and the $(n-1)$-sphere of $S_1$ as $\varphi_s$.  As previously mentioned, $\varphi_s \geq \varphi$
\\
Since we know $S_0$ and $S_1$ are $s$-well separated, we also know that the following statement is true:  $s$ $*$ $\max$($\rho_0$, $\rho_1$) $\leq \varphi$\\
When evaluating if an arbitrary subset of $S_0$ is $s$-well separated with $S_1$, the following condition must hold: $s$ $*$ $\max$($\rho_s$, $\rho_1$) $\leq \varphi_s$\\
Since we know $\varphi_s \geq \varphi$ and $\rho_s \leq \rho_0$.  , the condition is always satisfied
\end{proof}

Overall, this allows us to generate $s$-well separated point-sets from any distribution.

\section{s-well Separated Pair Decomposition and Realization Dependencies and Observations}
This process is powerful, however it has some dependencies.  This process assumes that the point-sets will be clustered about a $(n-1)$-sphere.  Most other clustering pattern would not be optimal given the chosen realization.  For example, given a set of data points generated from a Cauchy distribution \cite{DD3} (location parameter $= 2$, scale parameter $= 2$), we get the following graph for the raw data and an example of an $s$-well separated points set, $s = 1$.

\begin{figure}[h]
\includegraphics[scale = .38]{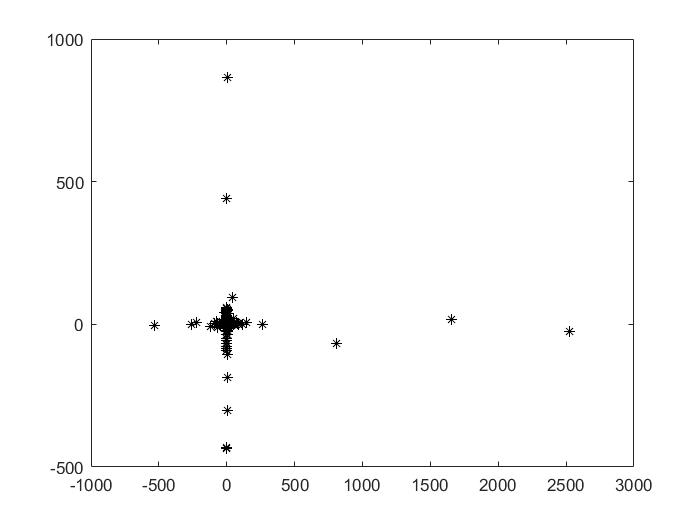}
\caption{
Raw Data generated from distribution Cauchy(2, 2).
}
\end{figure}

\begin{figure}[h]
\includegraphics[scale = .5]{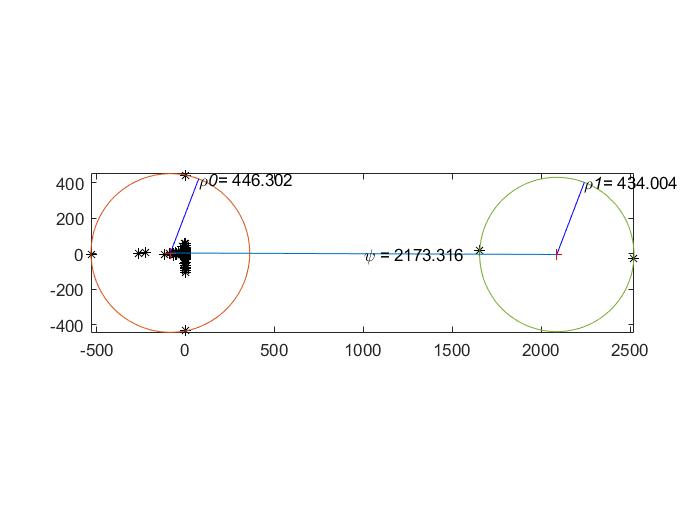}
\caption{A $s$-well separated point set from raw data in \textbf{Figure 6}, $s = 1$.
}
\end{figure}
\newpage
As seen, the clustering in \textbf{Figure 7} does NOT match the data, even though the two point-sets are $s$-well separated.

Additionally, this process also only compares two points sets at a time, effectively only clustering two points sets.  This algorithm would need non-trivial modifications to handle a case of three distinct clusters.

It is also important to note that the definition of clustering that this algorithm uses is fundamentally different from the definition of clustering other method's use. This method of clustering could create artificial clusters.  This is due to construction of the tree splitting the highest dimension, regardless of the point density.  This is oppose to the other definition of cluster, where high density regions are separated from low density regions. \cite{DD11} .An example of this is shown below, by imposing the clusters on the raw data.

\includegraphics[scale = .5]{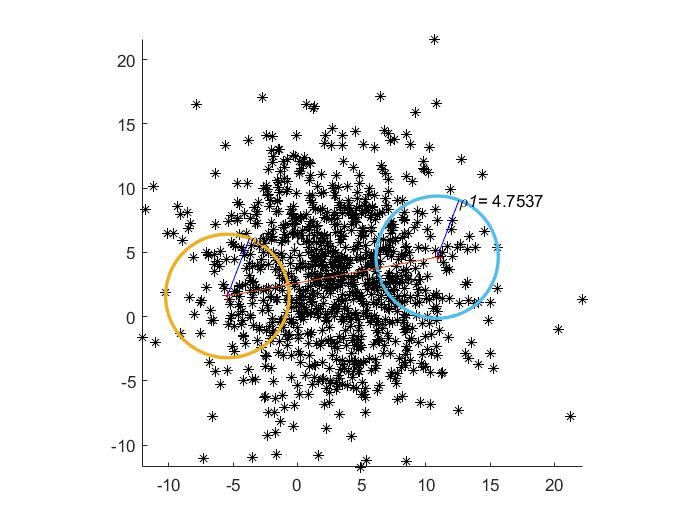}

\section{Power weighted Shortest Path Metric and Dependencies}
For an integer $K$ $>$ $1$, we consider an algorithm that is a permutation of Dijkstra's algorithm, that computes $K$-nearest neighbors (KNN) using a power weighted shortest path. This algorithm will form an arbitrary number of clusters of $K$ points.
This algorithm is developed in depth in \cite{DD2}, hence this paper will not go into it too deeply. 
This algorithm performs $n$-dimensional data clustering using a permutation of Dijkstra's algorithm to compute the $K$-nearest neighbors.  This algorithm utilizes the fact that it is assuming that the data is sampled from a higher dimensional manifold to prune unnecessary neighbors.  It does speed up the KNN algorithm greatly.  Results are shown below.

\begin{figure}[h]
\includegraphics[scale = .5]{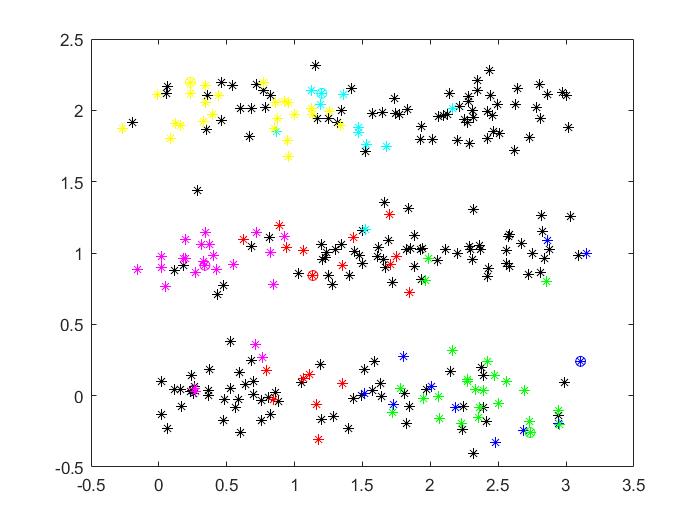}
\caption{The data set was generated in 100 ambient dimensions.\\Naive Dijkstra took 12.424 seconds, Dijkstra with pruning took 0.10195 seconds
}
\end{figure}

\clearpage
\section{Algorithm fusion}
In this section, we are now proposing to fuse the two algorithms discussed in the above sections.  The fused algorithm could have interesting implications on certain types of data sets.  It has already been shown that the k-nearest neighbors problem can be solved given the decomposition. \cite{DD6} \cite{DD7}
However, we will use the decomposition as a preprocessing step. then the processed data is fed into the p-wspm algorithm.  The preprocessing of the data greatly reduces the 'search space' of the second algorithm.  Hence, this is a two step clustering approach, which is commonly seen in various other clustering techniques \cite{DD10}. Discussed below are preliminary tests that were run to begin the algorithm fusion analysis.  The general process for the tests is as follows:
\begin{enumerate}
  \item We must first generate raw data to analyze.  To do this, we must determine the number of raw data points desired, the number of ambient dimensions, and the metric we want the raw data to be generated against.  Currently, we are able to generate raw data from any distribution supported by the c++ $<$random$>$ library.
  \item Once we have a generated data set, we must decide on the$s$value that will determine how well-separated each point set it.  This is discussed in \textbf{Section 2.2}.
  \item Once we have these parameters, we can construct the $s$-wspd data structure.  This is discussed in \textbf{Section 2.1}.
  \item Once the $s$-wspd data structure is constructed, we can recurse the $s$-wspd data structure to find all pairs of point-sets that are $s$-well separated.
  \item We can arbitrarily choose any pair of $s$-well separated point sets to use in the power weighted shortest path metric (p-wspm) algorithm that is discussed in \textbf{Section 4}.  In the following examples in \textbf{Section 6}, we chose the pair of $s$-well separated point-sets that have the greatest number of data points.
  \item We use the data points in the pair of $s$-well separated point-sets chosen above when running the p-wspm.  When running the p-wspm, select the ambient dimension (which should be the same as the value selected in the $s$-wspd algorithm), number of nearest neighbors to find, the number of clusters to produce, and the power weighting.
\end{enumerate}

\vspace{5mm}

In summary, we must select the following parameters for each of the algorithms:
\begin{itemize}
  \item $s$-well Separated Pair Decomposition Construction and Realization:
   \begin{itemize}
      \item The number of data points.
      \item The ambient dimension.
      \item The distribution the data is generated from.
      \item The $s$ value required to make the pair of point-sets $s$-well separated, AKA the separation parameter.
  \end{itemize}
  \item Power Weighted Shortest Path Metric:
    \begin{itemize}
      \item The number of clusters to form.
      \item The number of nearest neighbors to find.
      \item The power weighting.
  \end{itemize}
\end{itemize}

The next sections are examples of this process.

\newpage

\section{Visuals/Results}

\begin{figure}[H]
\includegraphics[width=0.5\textwidth]{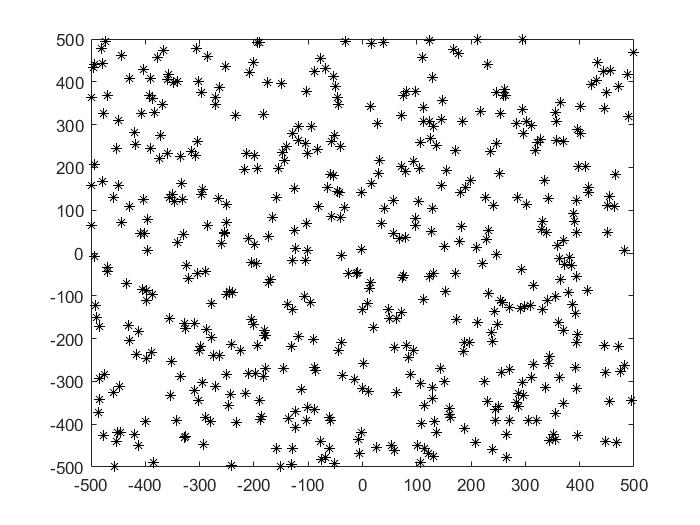}
\caption{$s$-wspd data structure construction parameters;\\There are 500 data points from a \\Uniform distribution (min = -500, max = 500) in $\mathbb{R}^{6}$ 
}
\end{figure}

\begin{figure}[H]
     \centering
     \begin{subfigure}[b]{0.495\textwidth}
         \centering
         \captionsetup{width=.85\linewidth}
         \includegraphics[width=\textwidth]{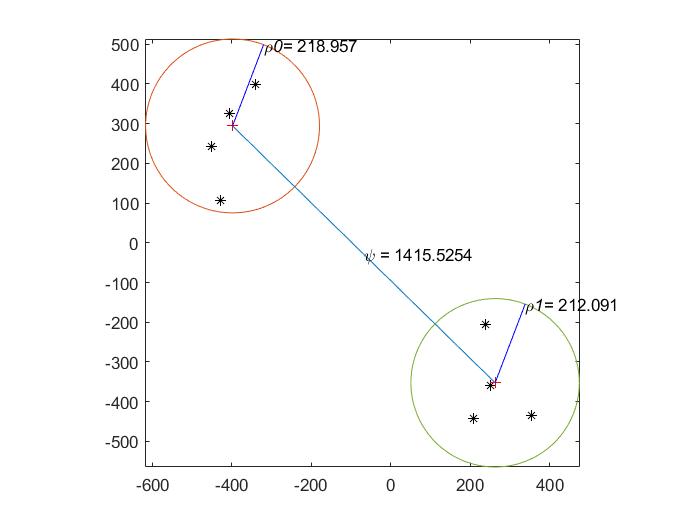}
         \caption{$s$-well separated point-sets parameters;\\The separation parameter 's',is chosen to be 4;$s$= 4.
}
     \end{subfigure}
     \hfill
     \begin{subfigure}[b]{0.495\textwidth}
         \centering
         \captionsetup{width=.85\linewidth}
         \includegraphics[width=\textwidth]{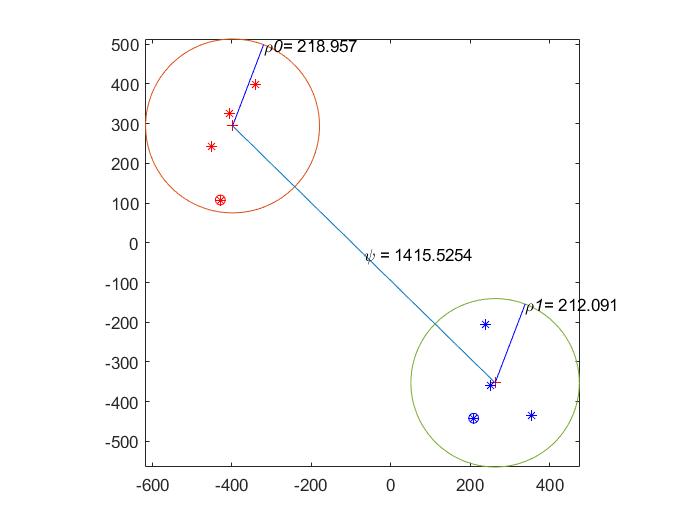}
        \caption{power-weighted shortest path metric parameters \# Clusters = 2, numNeighbors = 4, powerWeighting = 6.
}
     \end{subfigure}
\end{figure}
\newpage

\begin{figure}[H]
\includegraphics[width=0.5\textwidth]{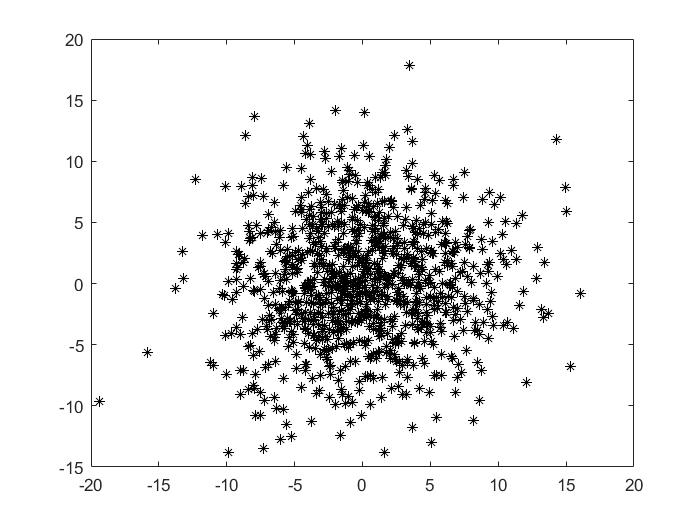}
\caption{$s$-wspd data structure construction parameters;\\There are 1000 data points from a \\Gaussian distribution (mean = 0, standard deviation = 5) in $\mathbb{R}^{6}$ 
}
\end{figure}

\begin{figure}[H]
     \centering
     \begin{subfigure}[b]{0.495\textwidth}
         \centering
         \captionsetup{width=.85\linewidth}
         \includegraphics[width=\textwidth]{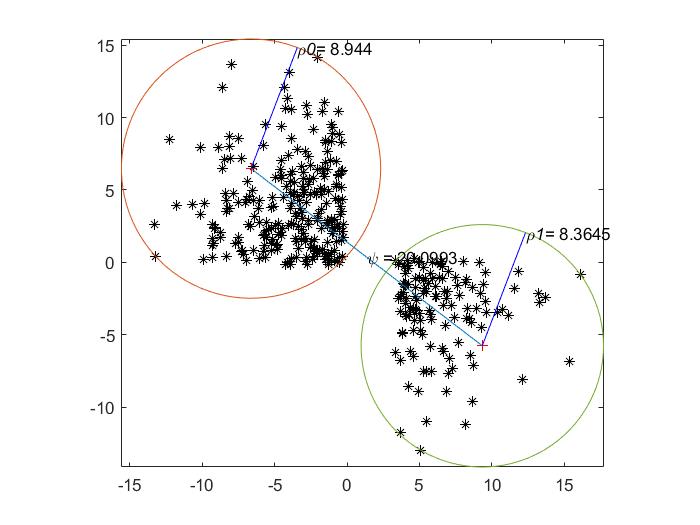}
    \caption{$s$-well separated point-sets parameters;The separation parameter 's', is chosen to be 0.25;$s$ = 0.25
    }
     \end{subfigure}
     \hfill
     \begin{subfigure}[b]{0.495\textwidth}
         \centering
         \captionsetup{width=.85\linewidth}
         \includegraphics[width=\textwidth]{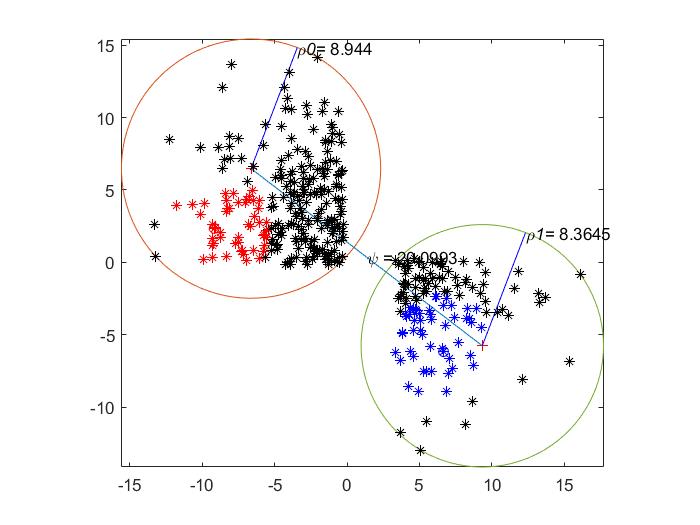}
    \caption{power-weighted shortest path metric parameters;numClusters=2; numNeighbors=25;  powerWeighting=2
    }
     \end{subfigure}
\end{figure}
\newpage

\begin{figure}[H]
\includegraphics[width=0.5\textwidth]{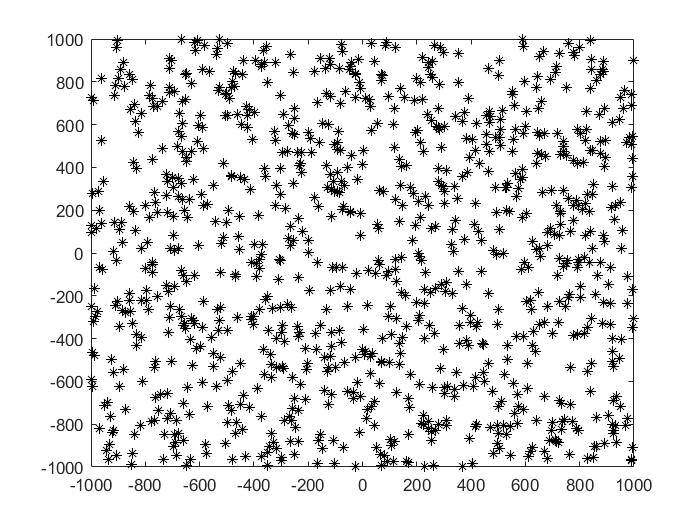}
\caption{$s$-wspd data structure construction parameters;\\There are 1000 data points from a \\Uniform distribution (min=-1000, max=1000) in $\mathbb{R}^{10}$ 
}
\end{figure}

\begin{figure}[H]
     \centering
     \begin{subfigure}[b]{0.495\textwidth}
         \centering
         \captionsetup{width=.85\linewidth}
         \includegraphics[width=\textwidth]{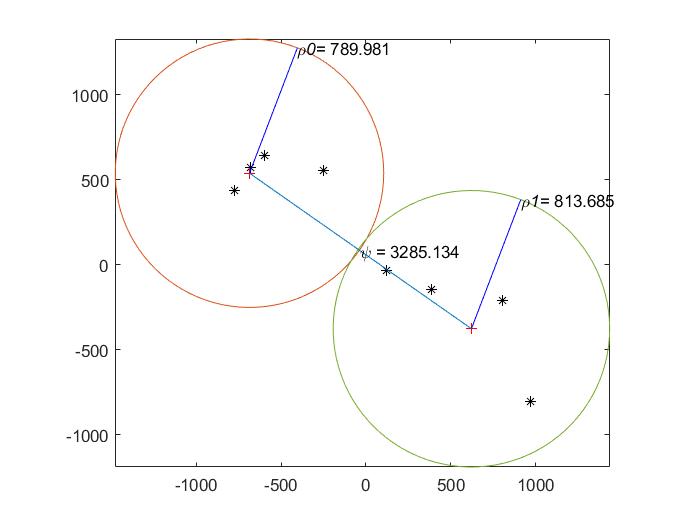}
    \caption{$s$-well separated point-sets parameters;The separation parameter 's', is chosen to be 2; $s$ = 2
    }
    \end{subfigure}
    \hfill
     \begin{subfigure}[b]{0.495\textwidth}
         \centering
         \captionsetup{width=.85\linewidth}
         \includegraphics[width=\textwidth]{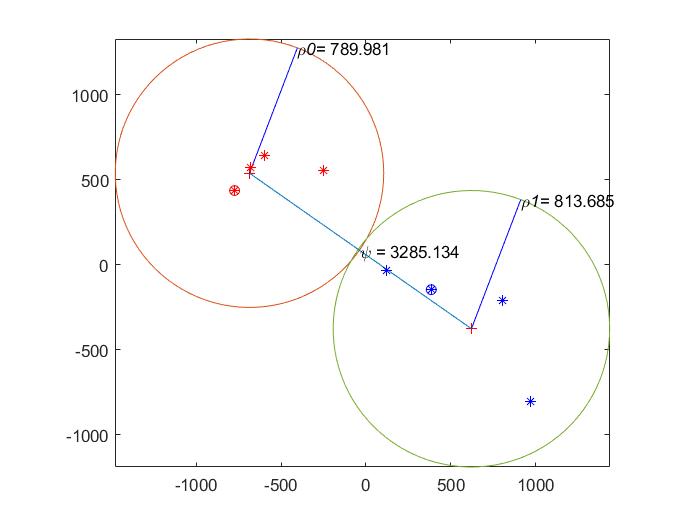}
\caption{power-weighted shortest path metric parameters;numClusters = 2; numNeighbors = 4; powerWeighting = 5
}
     \end{subfigure}
\end{figure}

\newpage

\section{Timing Comparisons}
Below shows the average time (averaged over 1,000 iterations) for each example shown in \textbf{Section 6}.  In each case, the time it took to perform the p-wspm on the data set is less with the preprocessing from the s-wspd algorithm.  Note that the table does not take into account the time it took to run the s-wspd algorithm, which was on the order of seconds.

Hence, it would be inefficient to preprocessing the data with the s-wspd algorithm if the p-wspm algorithm will only be run a few times.  If the p-wspm algorithm (or other clustering algorithms) will be run a lot, the preprocessing that the s-wspd algorithm does would be worth the time it takes.
\begin{table}[htbp]
    \centering
    \small
    \setlength\tabcolsep{2pt}

    \begin{tabular}{ c|c|c  }
        \multicolumn{3}{c}{Timings} \\
        \hline
        Algorithm parameters & \shortstack{p-wspm w/out  \\ pruning data}  & \shortstack{p-wspm w pruning data\\ with s-wspd algorithm} \\
        \hline
        \shortstack{Uniform (min = -500, max = 500),\\  $\mathbb{R}^{6}$, s = 4, numNeighbors=4, p=6}   & .025 secs   & .009 secs\\
        \hline
        \shortstack{Uniform (min = -500, max = 500),\\  $\mathbb{R}^{6}$, s = 4, numNeighbors=4, p=6}   & .0133 secs    & .0103 secs\\
        \hline
        \shortstack{Uniform (min = -500, max = 500),\\  $\mathbb{R}^{6}$, s = 4, numNeighbors=25, p=2}   & .005 seconds    &.003 secs\\
        \hline

    \end{tabular}

\end {table}

\newpage

\section{Future Research}
There are many future research topics that are useful to explore.  Three are given below: 

 \begin{itemize}
    \item \textbf{Determine what parameters of the $s$-wspd algorithm leads to the greatest efficiency in the p-wspm pruning process.}  This can be done by computing KNN with the naive Dijkstra's algorithm, and compare its time with the time using the modified Dijkstra's.
    \item \textbf{Extend the $s$-wspd algorithm to form a $(n-1)$-ellipse rather than a $(n-1)$-sphere when determining if 2 point sets are $s$-well separated.}  This will allow this algorithm to cluster a wider range of data.  Currently, the algorithm forms a bounding $(n-1)$-sphere around the data points, allowing it to cluster data clustered about a $(n-1)$-sphere. 
    \item \textbf{Extend the $s$-wspd algorithm to cluster an arbitrarily large number of clusters. Currently, this algorithm can only determine if two separate point sets are $s$-well separated.  This effectively only clusters two point sets, a limit on the algorithm.}  This algorithm can be modified to produce two $s$-well separated point sets, then take the larger point set and run the $s$-well separated algorithm again on it.
    \item \textbf{Investigating the fused algorithm on real data.  For example, studying the s-wspd algorithm for different data driven metrics.}  This is the simplest and important aspect of future research.  This can help extract certain properties of certain data sets. 
 \end{itemize}

\clearpage

\end{document}